\definecolor{selectiong}{HTML}{FCCC99}
\definecolor{negationg}{HTML}{6A01FF}
\definecolor{graycirc}{HTML}{657687}
\definecolor{bluecirc}{HTML}{B1DDF0}
\definecolor{redcirc}{HTML}{F86801}
\newcommand{\F}{\mathbf{F}}
\newcommand{\G}{\mathbf{G}}
\newcommand{\U}{\mathbf{U}}
\newcommand{\Predicate}{P_{\theta}}
\newcommand{\PredicateWithIndex}[1]{P^{#1}_{\theta_#1}}
\newcommand{\argmax}{\mathop{\mathrm{argmax}}}
\newtheorem{theorem}{Theorem}
\newcommand{\PredicateSet}{\mathcal{P}}
\newcommand{\LogicStucture}{\mathcal{L}}
\newtcbox{\entoure}[1][red]{on line,
    arc=3pt,
    colback=#1!10!white, 
    colframe=#1!90!black, 
    before upper={\rule[-3pt]{0pt}{10pt}},boxrule=1pt,
    boxsep=0pt,left=2pt,right=2pt,top=1pt,bottom=.5pt}
\newcommand{\selectiongate}{\tikzmarknode[draw,inner sep=2pt,rounded corners,fill=selectiong]{A}{selection gate}}
\newcommand{\negationgate}{\tikzmarknode[draw,inner sep=2pt,rounded corners,fill=negationg,text=white]{A}{negation gate}}
\newcommand*\circled[1]{\tikz[baseline=-0.5ex]{
        \node[shape=circle,draw,inner sep=2pt,fill=#1] (char) {};}}
\title{\LARGE \bf
FLoRA: A \underline{F}ramework for \underline{L}earning Sc\underline{o}ring \underline{R}ules in \underline{A}utonomous Driving Planning Systems}
\author{
    Zikang Xiong, Joe Eappen, and Suresh Jagannathan 
\thanks{ Authors are with the Computer Science Department, Purdue University. 
{\tt\small \{xiong84,jeappen,suresh\}@cs.purdue.edu}}
}
\begin{document}
\maketitle

\begin{abstract}
    In autonomous driving systems, motion planning is commonly implemented as a two-stage process: first, a trajectory proposer generates multiple candidate trajectories, then a scoring mechanism selects the most suitable trajectory for execution. For this critical selection stage, rule-based scoring mechanisms are particularly appealing as they can explicitly encode driving preferences, safety constraints, and traffic regulations in a formalized, human-understandable format. However, manually crafting these scoring rules presents significant challenges: the rules often contain complex interdependencies, require careful parameter tuning, and may not fully capture the nuances present in real-world driving data.
    This work introduces FLoRA, a novel framework that bridges this gap by learning interpretable scoring rules represented in temporal logic. Our method features a learnable logic structure that captures nuanced relationships across diverse driving scenarios, optimizing both rules and parameters directly from real-world driving demonstrations collected in NuPlan. Our approach effectively learns to evaluate driving behavior even though the training data only contains positive examples (successful driving demonstrations). Evaluations in closed-loop planning simulations demonstrate that our learned scoring rules outperform existing techniques, including expert designed rules and neural network scoring models, while maintaining interpretability.
    This work introduces a data-driven approach to enhance the scoring mechanism in autonomous driving systems, designed as a plug-in module to seamlessly integrate with various trajectory proposers.  Our video and code are available on \href{https://xiong.zikang.me/FLoRA/}{xiong.zikang.me/FLoRA/}.
\end{abstract}

\section{Introduction}
\label{sec:intro}

\begin{figure}[ht!]
    \centering
    \includegraphics[width=\linewidth]{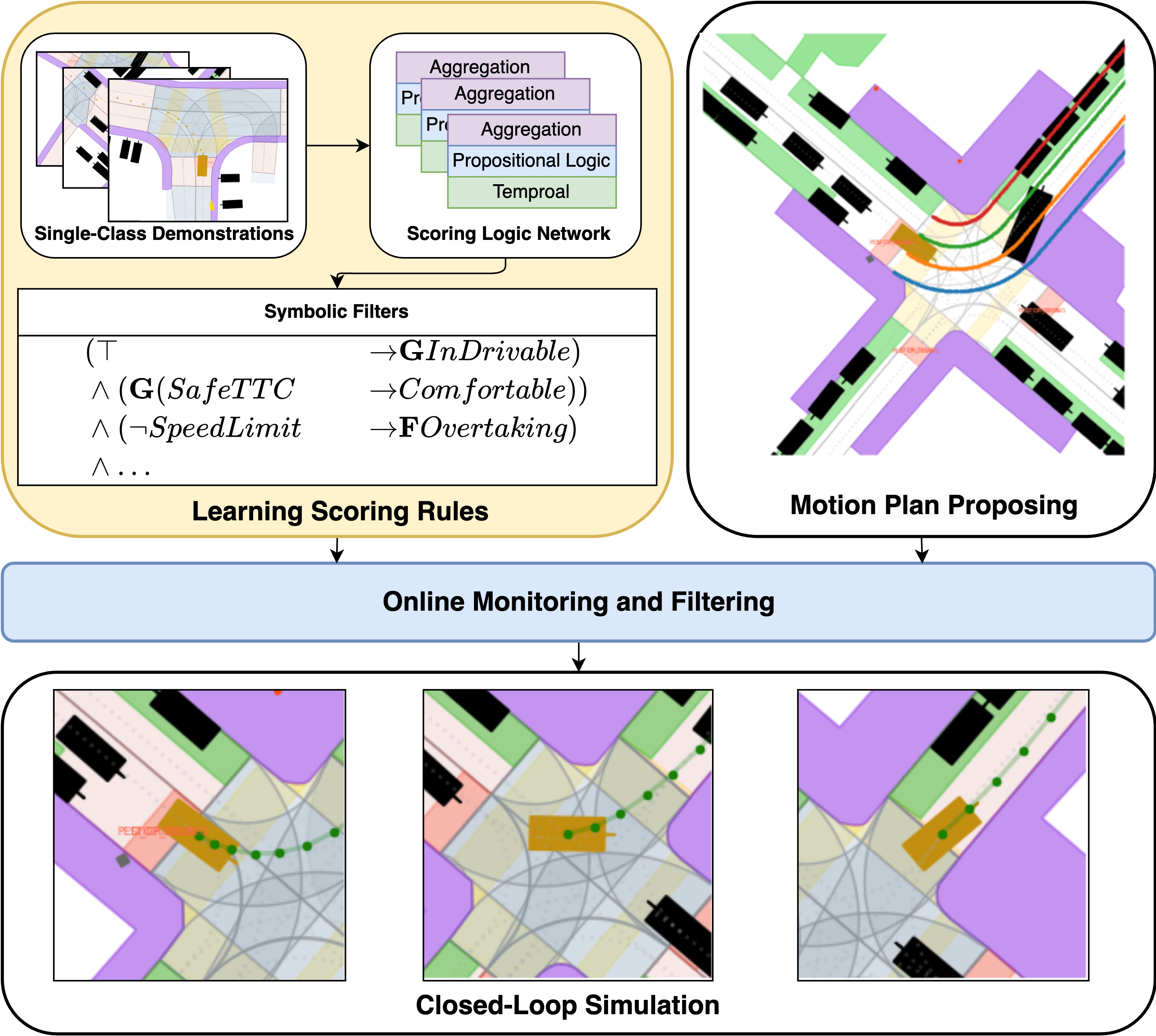}
    \caption{ This figure illustrates our framework for scoring and selecting trajectories in autonomous driving systems. Modern autonomous driving planners typically follow a propose-selection paradigm, where multiple candidate trajectories are first generated and then filtered through a scoring mechanism. As shown in the \textbf{Motion Plan Proposing} block, multiple trajectories (colored lines) are proposed as potential future paths for the autonomous vehicle. These candidates need to be evaluated and ranked to select the most suitable trajectory for execution. Our key contribution, the \textbf{Learning Scoring Rules} block, demonstrates how we learn interpretable scoring rules from human driving demonstration from NuPlan \cite{Karnchanachari2024TowardsLP}. Instead of manually crafting rules or using black-box scoring models, we develop a Scoring Logic Network (SLN) that automatically learns temporal logic rules from data (specific rules illustrated in Sec.~\ref{sec:logic_rules_discovered}). These learned rules are then deployed in the \textbf{Online Monitoring and Filtering} block, which continuously evaluates the proposed trajectories at 20 Hz following NuPlan's log frequency \cite{Karnchanachari2024TowardsLP} during operation. In the scene visualization, the ego vehicle is shown in gold, while other vehicles are represented in black, the curbs are marked in purple, and the driveable areas (lanes and intersections) are marked as pink and blue, respectively. Among the proposed trajectories, the green trajectory receives the highest score as it successfully maintains a safe distance from the curbs, while exhibiting appropriate curvature and comfort characteristics. All other colored trajectories are not selected for violating one or more learned rules, as detailed in Sec.~\ref{sec:logic_rules_discovered}. During the monitoring, following the setting in Planning Driver Model (PDM) \cite{Dauner2023CORL}, we assume that the future 4-second trajectories of other cars are known. The selected trajectory is then executed in the \textbf{Closed-Loop Simulation} block.
    }
    \label{fig:motivation_example}
\end{figure}

Modern autonomous driving systems typically produce multiple potential plans \cite{Dauner2023CORL,hu2023planning,phan2022driving,jiang2022efficient,chen2024end}, as this parallel approach offers several key advantages: it allows the system to consider different driving modalities (such as aggressive or conservative behaviors), accounts for future uncertainties, and provides redundancy in case certain paths become infeasible. These generated plans then need to be evaluated through a scoring mechanism to select the most suitable one for execution.

The importance of effective scoring becomes even more apparent in complex autonomous driving systems, particularly in end-to-end approaches \cite{hu2023planning,chen2024end}. These systems often utilize large, intricate neural networks that incorporate elements of randomness, such as dropout or sampling from probability distributions. While powerful, such characteristics can make it challenging to predict the system's behavior consistently \cite{chen2024end}. By implementing interpretable scoring rules as a final evaluation layer, we can assess these generated plans against clear, understandable criteria, thereby introducing much-needed predictability and reliability to these complex systems. In essence, scoring techniques serve as a critical bridge, connecting the raw outputs of planning algorithms to the final, executable plans. This additional evaluation step helps mitigate the uncertainties inherent in complex planning systems, significantly enhancing the safety, efficiency, and overall performance of autonomous vehicles as they navigate through our highways and cities.

With real-world driving data available in datasets like NuPlan \cite{Karnchanachari2024TowardsLP}, most current learning methods focus on directly learning motion plan proposers rather than learning interpretable scoring mechanisms to evaluate these plans. We instead focus on learning scoring rules represented in temporal logic for evaluating autonomous driving plans, which assess and rank plans generated by motion plan proposers. These scoring rules capture the latent relationships between various driving rules and constraints; for example, if a vehicle has a safe time-to-collision with surrounding vehicles, it should always be subject to all comfort constraints. By applying these rules to the output of a motion planner, we can score and select desirable plans, ensuring that the planned paths adhere to safety standards and traffic regulations while maintaining optimal performance. Figure~\ref{fig:motivation_example} illustrates the learning process and how we might apply scoring rules.

In practice, building these scoring rules presents several significant challenges. First, \emph{the latent relationships and dependencies among various rules are often non-trivial}. For instance, while a vehicle is generally not permitted to exceed the speed limit, exceptions may exist in specific scenarios such as overtaking another vehicle. These nuanced dependencies make it challenging to create a comprehensive set of rules that account for all possible situations. Second, \emph{determining the optimal parameters for rules is a complex task}. For example, establishing appropriate thresholds for safe time-to-collision, comfortable acceleration, or acceptable steering angle requires careful consideration of multiple factors. These parameters must balance safety concerns with the need for efficient and smooth vehicle operation. Third, \emph{available demonstration data typically only showcases correct behavior and lacks sufficient examples of incorrect actions} \cite{Karnchanachari2024TowardsLP,akhauri2020enhanced,chen2024end}. Having only single-class (i.e., correct-behavior only) training data poses a significant challenge for learning scoring models, as they must learn to distinguish between acceptable and unacceptable behaviors without a sufficient number of explicit negative samples.

Our approach addresses these challenges through three interconnected key ideas. First, to capture latent relationships among driving rules, we introduce a learnable logic structure that seamlessly integrates temporal and propositional logic. Our structure can represent nuanced decisions such as when it is appropriate to exceed the speed limit for a safe overtaking maneuver. Second, we tackle the challenge of parameter optimization by letting the data speak for itself. Rather than relying on manual tuning, our system learns optimal rule parameters directly from driving demonstrations, leveraging the fully differentiable logic structure used to represent rules. Third, we overcome the limitation of learning from only positive examples through a novel regularization-constrained optimization framework that simultaneously rewards correct demonstration behaviors and restricts the space of acceptable ones.

We evaluate the efficacy of our learned scoring rules in NuPlan \cite{Karnchanachari2024TowardsLP} closed-loop simulations. These results demonstrate that our learned rules can effectively score and select desirable plans, outperforming both expert-crafted rules and neural network-based approaches when considering both interactive and non-interactive scenarios. We further show that the learned rules perform consistently well across different proposers, including PDM \cite{Dauner2023CORL}, PDM-Hybrid \cite{Dauner2023CORL}, ML-Prop \cite{Karnchanachari2024TowardsLP}, and a rule-based Acceleration-Time (AT) sampler \cite{jiang2022efficient}. In summary, our contributions are as follows:
\begin{itemize}
    \item We propose a novel learnable logic structure that discovers and captures latent relationships represented in temporal logic.
    \item We introduce a data-driven approach to optimize rule parameters, enabling the system to learn effective scoring rules from driving demonstrations.
    \item We present an optimization framework that allows the system to learn rules and parameters from human driving demonstrations, without requiring unsafe and rare accident examples.
    \item We demonstrate the effectiveness of our approach in scoring and selecting desirable plans via NuPlan closed-loop simulation, outperforming expert-crafted rules \cite{Dauner2023CORL} and neural network-based approaches \cite{jiang2022efficient} across various scenarios and proposers.
\end{itemize}
\section{Related Work}
Scoring models are widely used in autonomous driving systems to evaluate the safety, efficiency, and comfort of planned trajectories \cite{Dauner2023CORL,Karnchanachari2024TowardsLP,hu2023planning,xiao2021rule,wishart2020driving,junietz2018criticality,hekmatnejad2019encoding,Deng2023TARGETAS,Zhou2023SpecificationBasedAD,houston2021one,riedmaier2020survey,weng2020model,vasudevan2024planningadaptiveworldmodels,phan2022driving,huang2021driving,kuderer2015learning,wulfmeier2017large,silver2012learning,jiang2022efficient}. This evaluation becomes particularly crucial in end-to-end autonomous driving systems, where behavioral uncertainties arise from multiple sources: neural network stochasticity (e.g., dropout, distributional sampling)~\cite{hu2023planning}, environmental condition variations~\cite{hu2023planning}, and complex multi-agent interaction modeling~\cite{chen2024end}.

Existing scoring models can be broadly categorized into rule-based and learning-based models. Rule-based scoring models \cite{Dauner2023CORL,Karnchanachari2024TowardsLP,hu2023planning,xiao2021rule,wishart2020driving,junietz2018criticality,hekmatnejad2019encoding,Deng2023TARGETAS,Zhou2023SpecificationBasedAD,houston2021one,riedmaier2020survey} leverage expert knowledge and domain-specific rules, offering interpretability but lacking data-driven adaptability. In contrast, learning-based scoring models, typically approximated with deep neural network, \cite{weng2020model,vasudevan2024planningadaptiveworldmodels,phan2022driving,huang2021driving,kuderer2015learning,wulfmeier2017large,silver2012learning,jiang2022efficient} are data-driven and can capture complex patterns, but often sacrifice interpretability, which can be intimidating for safety-critical applications.

In the broader context of temporal logic rule learning \cite{bartocci2022survey}, some researchers have explored learning interpretable rules from demonstrations. However, the current literature has not explored the potential of learning interpretable rules directly from driving data, in the context of autonomous driving systems.

\section{Preliminaries}
\label{sec:problem_formulation}

In this section, we formulate the key technical components and our objectives.

\paragraph{Predicate $\Predicate$}
At a certain time point, given all environment information $\mathcal{E} = (M, I, A)$ and a motion plan $\tau$, the differentiable predicate $\Predicate$ is defined as: $\Predicate : (\mathcal{E} \times \tau) \rightarrow [-1, 1]$, which evaluates driving conditions and the ego car's motion plan \footnote{The ego car refers to the vehicle being controlled in a driving scenario.}. Here, $M$ represents the HD map API that provides the functionality to query a drivable area, lane information, routing information and occupancy grid; $I = \{i_1,...,i_K\}$ represents the set of traffic light states, where each $i_k \in \{\mathit{red, yellow, green}\}$; $A = \{a_1,...,a_N\}$ represents the set of other agents, where each $a_i = \{(x_t, y_t, v_t)\}_{t=0}^{T}$ contains the agent's trajectory for the next 4 seconds (i.e., $T = 80$, $20Hz \times 4 s$). The motion plan $\tau = \{(x_t, y_t, v_t)\}_{t=0}^{T}$  consisting of a sequence of positions $(x_t, y_t)$ and speeds $v_t$, where higher-order derivatives (e.g., acceleration, jerk) can be computed through numerical differentiation when needed.
$\Predicate$ maps $\mathcal{E}$ and $\tau$ to a truth confidence value in $[-1,1]$, where $\theta$ are the predicate parameters. Each predicate $\Predicate$ is parameterized by $\theta$, which defines thresholds or constraints specific to that predicate (e.g., safe time-to-collision threshold, comfortable acceleration bounds). When designing the predicate, we ensure that the gradient $\nabla_{\theta} \Predicate$ exists and can be computed. The sign of $\Predicate$ indicates truth. $\Predicate < 0$ implies False; $\Predicate > 0$ implies True. The absolute value $|\Predicate|$ indicates the degree of confidence.
Similar to most existing work \cite{bartocci2022survey}, we explicitly design the predicates and focus this paper on learning logical connections and parameters assuming a given set of predicates. With the predicate defined, we can now move on to discussing how these predicates are combined into logical formulas.

\paragraph{Formula $\mathcal{L}$}
Given a differentiable predicate set $\PredicateSet = \{\PredicateWithIndex{1}, \PredicateWithIndex{2}, \ldots, \PredicateWithIndex{n}\}$, we introduce a $\mathtt{LTL}_f$ logic space \cite{LTLf} (LTL over finite traces) that includes compositions of predicates from $\PredicateSet$ and logic operators. The logic formula  $\mathcal{L}$ can be generated from the following grammar:
\begin{align}
    \mathcal{L} := \Predicate \mid \G\ \mathcal{L} \mid \F\ \mathcal{L} \mid \lnot \mathcal{L} \mid \mathcal{L} \land \mathcal{L}' \mid \mathcal{L} \lor \mathcal{L}'
    \label{eq:logic_formula_syntax}
\end{align}
where $\Predicate \in \mathcal{P}$ is a differentiable predicate, $\G$ and $\F$ are temporal operators representing ``globally'' and ``finally'' respectively, $\lnot$ is logical negation, $\land$ is logical and, and $\lor$ is logical or.  Like most existing work \cite{bartocci2022survey}, the strong ``Until'' ($\mathcal{L}\ \U\ \mathcal{L}'$) is not included because it can be represented using existing logic operators ($\F\ \mathcal{L}' \wedge \G(\mathcal{L} \lor \mathcal{L}')$). Having established the syntax for our logic formulas, we now need a way to evaluate them quantitatively.

\paragraph{Quantitative Evaluation of Formula}
Given a finite input sequence $S = \{(\mathcal{E}_t, \tau_t)\}_{t=0}^T$ sampled at different time points, up to a bounded time $T$, we can evaluate the logic formula $\mathcal{L}$ quantitatively using a set of min and max operators\cite{fainekos2009robustness, deshmukh2017robust}. This evaluation maps the sequence $S$ to a value in $[-1, 1]$, denoted as
$
    \mathcal{L}(S; \boldsymbol{\theta}) \rightarrow [-1, 1].
$
Here, $\boldsymbol{\theta}$ represents all the predicates' parameters.
Specifically, we define the quantitative evaluation of an atomic predicate $\Predicate$ at time $t$ as $\Predicate(\mathcal{E}_t, \tau_t) \in [-1, 1]$. In \eqref{eq:temporal_operators}, the temporal logic operators $\G$ and $\F$ evaluate the formula $\mathcal{L}$ over the entire sequence from time $t$ onwards. $\G \mathcal{L}$ (globally) returns the minimum value of $\mathcal{L}$ over all future time points, which ensures the property holds throughout the sequence if $\G \mathcal{L}$ evaluates to a positive value. $\F \mathcal{L}$ (finally) returns the maximum value, indicating the property is satisfied at least once in the future. The evaluation function $\rho$ is defined as:
\begin{equation}
    \begin{aligned}
        \rho(\G \mathcal{L}, t) & = \min_{t' \geq t} \rho(\mathcal{L}, t') \quad & \rho(\F \mathcal{L}, t) & = \max_{t' \geq t} \rho(\mathcal{L}, t')
    \end{aligned}
    \label{eq:temporal_operators}
\end{equation}
For single time point evaluation, the logical operators and ($\land$), or ($\lor$), and not ($\lnot$) are defined using min, max, and negation operations:
\begin{equation}
    \begin{aligned}
         & \rho(\mathcal{L} \land \mathcal{L}', t) & = & \min\{\rho(\mathcal{L}, t), \rho(\mathcal{L}', t)\} \\
         & \rho(\mathcal{L} \lor \mathcal{L}', t)  & = & \max\{\rho(\mathcal{L}, t), \rho(\mathcal{L}', t)\} \\
         & \rho(\lnot \mathcal{L}, t)              & = & -\rho(\mathcal{L}, t)
    \end{aligned}
    \label{eq:fol_operators}
\end{equation}
All the operations defined by $\rho$ are differentiable, which enables the use of backpropagation in the learning process.  In practice, we use softmin and softmax to approximate min and max operators for a smooth gradient \cite{Leung2020BackpropagationTS}. With the evaluation framework in place, we can now define our overall objective for learning optimal driving rules. For simplicity, we define $\rho(\cdot):= \rho(\cdot, 0)$, meaning evaluate from the initial of input sequence.

\paragraph{Objective}
Our objective is twofold: (1) learn the optimal logic formula $\mathcal{L}^*$, and (2) optimize the parameters $\boldsymbol{\theta}$ of the predicates, which characterize the demonstration data accurately.
Formally, we aim to solve the following problem:
\begin{align}
    \mathcal{L}^*, \boldsymbol{\theta}^* = \argmax_{\mathcal{L} \in \Omega_\PredicateSet, \boldsymbol{\theta}} \mathbb{E}_{S \sim \mathcal{D}^+} [\mathcal{L}(S; \boldsymbol{\theta})]
    \label{eq:objective}
\end{align}
where $\mathcal{D}^+$ represents driving demonstrations, which consist \textit{solely} of correct demonstrations that represent ideal driving behaviors.
\section{Approach}
\label{sec:approach}

This section presents our approach to learning interpretable driving rules from demonstrations. We begin by introducing the concept of condition-action pairs in Sec.~\ref{sec:condition-action-pair}, which forms the foundation of our rule representation. Sec.~\ref{sec:learnable_logic_structure} introduces the core of our method: the learnable logic structure. Here, we explain its components, analyze its capabilities, and describe how we extract and simplify rules from it. Finally, Sec.~\ref{sec:learning_from_single_class_demonstration} introduces our novel regularization techniques to overcome the limitations of learning from positive-only examples.

\subsection{Condition-Action Pair}
\label{sec:condition-action-pair}
We consider rules that consist of conditions and expected actions. For instance, one such rule might require that the ego car eventually stop when approaching a stop sign. This pattern of condition-action pairing extends to countless driving situations. Thus, we focus on effectively learning and reducing driving rules to condition-action pairs.

Predicates are the basic unit of our rules. We categorize our predicates into two types: condition predicates $\bar{\PredicateSet} = \{\bar{\PredicateWithIndex{1}}, \bar{\PredicateWithIndex{2}}, \ldots, \bar{\PredicateWithIndex{n}}\}$ and action predicates $\dot{\PredicateSet} = \{\dot{\PredicateWithIndex{1}}, \dot{\PredicateWithIndex{2}}, \ldots, \dot{\PredicateWithIndex{m}}\}$. Condition predicates evaluate traffic conditions (e.g., if approaching a stop sign), while action predicates assess the motion plan (e.g. if the ego car is stopped). Given
\begin{equation}
    \begin{aligned}
        \mathit{condition} & := \bar{\Predicate} \mid \G\ condition \mid \F\ \mathit{condition} \mid \lnot \mathit{condition} \\
        \mathit{action}    & := \dot{\Predicate} \mid \G\ \mathit{action} \mid \F\ \mathit{action} \mid \lnot \mathit{action}
    \end{aligned}
\end{equation}
we extract and simplify the learned logic formula $\mathcal{L}$ to a set of propositional rules of the form:
\begin{equation}
    \bigodot_{i=1}^m \left(\bigwedge_{j=1}^n condition_j \rightarrow action_i\right)
    \label{eq:condition_action_pair}
\end{equation}
where $\bigodot$ denotes that this condition-action pairs are connected by $\land$ or $\lor$ operators. The conjunction of conditions ($\bigwedge_{j=1}^n condition_j$) allows for more precise and specific criteria to be defined for each action, thereby describing precisely when the action should be allowed. Building upon this foundation, we introduce a learnable logic structure to represent and learn these condition-action pairs.

\subsection{Learnable Logic Structure}
\label{sec:learnable_logic_structure}

The learnable logic structure $\bar{\mathcal{L}}$ is a directed acyclic computation graph that represents a compositional logic formula. It consists of three types of layers: \textbf{Temporal, Propositional}, and \textbf{Aggregation}. These layers are interconnected through learnable gates that determine the flow and combination of logical operations. An example of this structure is shown in Fig.~\ref{fig:learnable_logic_structure}.

\begin{figure}[!htp]
    \centering
    \includegraphics[width=\linewidth]{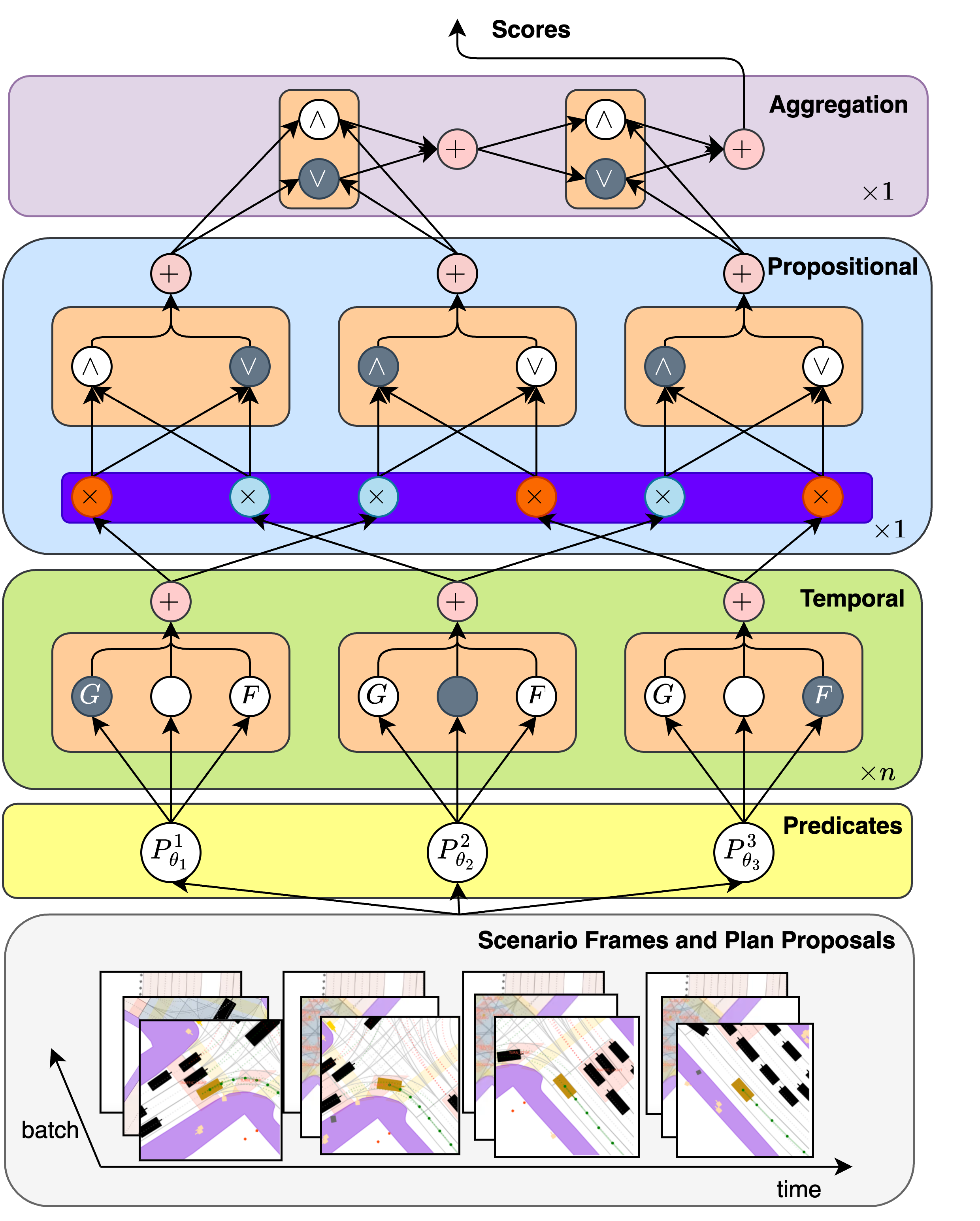}
    \caption{
        The logic structure $\bar{\mathcal{L}}$ consists of three types of layers: \textbf{Temporal, Propositional}, and \textbf{Aggregation}. The \textbf{Temporal} layer processes the initial predicates, applying temporal operators. The \textbf{Propositional} layer generates all possible pairs of predicates connected by logical operators. The \textbf{Aggregation} layer aggregates the output of the \textbf{Propositional} layer into one cluster by deciding the logic operator to connect neighboring clusters. \textbf{Temporal} layers can be stacked. a layer's formal definition is in Sec.~\ref{sec:layer_definition}.
        Two types of gates, the \selectiongate, and the \negationgate, are used to control the logic operators and the sign of the cluster inputs, respectively. Each clear circle ($\bigcirc$) in these gates represents a single value \emph{weight}. In the selection gate, the
        \protect\circled{graycirc} circle represents the operator with the largest weight, meaning the operator is selected. In the negation gate, the \protect\circled{bluecirc} circle represents the negation of the input (i.e., multiply with a negative number), while the \protect\circled{redcirc} circle represents the original input (i.e., a positive number). The gate implementation is described in Sec.~\ref{sec:gate_implementation}. Supposing we only consider one layer of Temporal layer ($n = 1$), and given a set of predicates $\PredicateSet = \{\PredicateWithIndex{1}, \PredicateWithIndex{2}, \PredicateWithIndex{3}\}$, $\PredicateWithIndex{2} \in \bar{\PredicateSet}$ and $\PredicateWithIndex{1}, \PredicateWithIndex{3} \in \dot{\PredicateSet}$, this learnable logic structure represents the logic formula $(\G \PredicateWithIndex{1} \lor \lnot \PredicateWithIndex{2}) \lor (\lnot \G \PredicateWithIndex{1} \land \F \PredicateWithIndex{3}) \lor (\lnot \PredicateWithIndex{2} \land \F \PredicateWithIndex{3})$
        This formula can be further reduced to $\PredicateWithIndex{2} \rightarrow (\G \PredicateWithIndex{1} \lor \F \PredicateWithIndex{3})$.
    }
    \label{fig:learnable_logic_structure}
\end{figure}

\subsubsection{Layer Definition}
\label{sec:layer_definition}
In the bottom block of Fig.~\ref{fig:learnable_logic_structure}, the frame batch is processed through $\PredicateSet$ and then passed through \textbf{Temporal} layers. Let $F = \{f_1, f_2, ..., f_T\}$ be a sequence of $T$ frames, where each frame represents a time point. Define $\PredicateSet = \{\PredicateWithIndex{1}, \PredicateWithIndex{2}, ..., \PredicateWithIndex{N}\}$ as a set of $N$ predicates. For each predicate $\PredicateWithIndex{i} \in \PredicateSet$ and each frame $f_t \in F$, we compute $X_i^t = \PredicateWithIndex{i}(f_t)$. Let $X_i^T = [x_i^1, x_i^2, ..., x_i^T]$ be the sequence of predicate values for predicate $\PredicateWithIndex{i}$ across all time steps. The output of the predicates is then defined as $\mathcal{X}^T = \{X_1^T, X_2^T, ..., X_N^T\}$, where each $X_i^T \in \mathbb{R}^T$ contains the predicate values computed over the entire time sequence for the $i$-th predicate. These frame sequences can be batched as shown in Fig.~\ref{fig:learnable_logic_structure}, to simplify the symbols, we only discuss the case with batch size one in the following parts.

The \textbf{Temporal} layer $\mathcal{T}$ operates on each element $X_i^T \in \mathcal{X}^T$, potentially applying a temporal operator. Formally, the output of $\mathcal{T}$ is $ O^T  = \mathcal{T}(\mathcal{X}^T) = \{o^T_1, o^T_2, \ldots, o^T_N\}$, where $o^T_i = \mathcal{T}(x_i^T) \in \{\mathbf{G} X_i^T, \mathbf{F} X_i^T, X_i^T\}$. Temporal layers can be stacked, allowing for the composition of temporal operators. For instance, with two stacked temporal layers, we could have $\mathcal{T}_2(\mathcal{T}_1(X_i^T)) = \mathbf{F}(\mathbf{G}(X_i^T))$. This composition allows for expressing more nuanced and complex temporal properties.

The \textbf{Propositional} logic layer $\mathcal{F}$ operates on its input set $O^T$, generating ${N \choose 2}$ clusters, each containing a combination of two inputs connected by a logical operator. The behavior of $\mathcal{F}$ is formalized as $\mathcal{F}(O^T) = O^P = \{o_1^P, o_2^P, \ldots, o_{N \choose 2}^P\}$, where $o_i^P = (\lnot) o_j^T \circ (\lnot) o_k^T$, $\circ \in \{\land, \lor\}$, and $O^T = \{o^T_1, o^T_2, \ldots, o^T_N\}$ is the output of a \textbf{Temporal} layer. Here, $j$ and $k$ represent the indices of the two different inputs being combined. Each input can be negated or unchanged when passing through a logic layer. We do not stack the \textbf{Propositional} layer as it would lead to exponential growth in the number of clusters; aggregating on one \textbf{Propositional} layer can represent any formula in the form of \eqref{eq:condition_action_pair} as we show in \ref{sec:logic_space_analysis}.

The \textbf{Aggregation} layer aggregates the output of the \textbf{Propositional} ($O^P$) layer into one cluster using logical connectives $\{\land, \lor\}$ to connect neighboring clusters. Formally, given the input $O^P$, the output of the \textbf{Aggregation} layer can be represented as $\mathcal{A}(O^P) = o^P_1 \circ_1 o^P_2 \circ_2 \ldots \circ_{{N \choose 2} - 1} o^P_{N \choose 2}$, where $\circ$ represents $\land$ or $\lor$.

A logic structure can be formally defined as
$
    \bar{\mathcal{L}}  = \mathcal{A}(\mathcal{F}(\mathcal{T}^{\times n}(\PredicateSet))),
$
where $n$ is the number of stacked Temporal layers.

\subsubsection{Gate Implementation}
\label{sec:gate_implementation}
The layers in Fig.~\ref{fig:learnable_logic_structure} are composed of selection gates and negation gates. Each $\bigcirc$ in these gates represents a weight $w \in \mathbb{R}$.

The \selectiongate~ acts as a soft attention mechanism to select between different operators across all layers (i.e., $\G, \F$ and identity operator in the Temporal layer, $\land, \lor$ in the \textbf{Propositional} and \textbf{Aggregation} layers), defined as:
\begin{equation}
    g_s(O) = \sigma([w_s^1 ; \cdots ; w_s^k]) \cdot [o_1 ; \cdots ; o_k]^T
\end{equation}
where $\sigma(\cdot)$ denotes the softmax function. For temporal operators in the \textbf{Temporal} layer, $O = [\rho(\G X_i^T), \rho(\F X_i^T), \rho(X_i^T)]^\top \in \mathbb{R}^3$ and $k=3$.
For logic operators in the \textbf{Propositional} and \textbf{Aggregation} layers, $O = [\rho(o_1 \land o_2), \rho(o_1 \lor o_2)]^\top \in \mathbb{R}^2$ and $k=2$. The evaluation function $\rho$ is defined in  \eqref{eq:temporal_operators} and \eqref{eq:fol_operators}. The selection gate blends operators using softmax to enable continuous gradient flow during training.

The \negationgate, given by $g_n = \tanh(w_{neg}) \cdot x$, with learnable parameters $w_{neg}$, controls the sign of cluster $x$ inputs to the Propositional layer.
\begin{equation}
    g_n = \tanh(w_{neg}) \cdot x, \quad w_{neg} \in \mathbb{R}, x \in \mathbb{R}
\end{equation}
The $\tanh$ function constrains the output to $[-1, 1]$. According to the quantitative semantics defined in \eqref{eq:fol_operators}, multiplying by a negative value negates the input. The gradient properties of $\tanh$ encourage $w_{neg}$ to converge towards either $-1$ or $1$ during training, effectively learning whether to negate the input.

\subsubsection{Logic Space Analysis}
\label{sec:logic_space_analysis}
Stacking \textbf{Temporal} layers monotonically increases the logic space. Creating a logic space that contains up to $n$ nested temporal operators can be easily achieved by stacking $n$ \textbf{Temporal} layers. For the \textbf{Propositional} layer, we assert that:

\begin{theorem}
    Aggregating the output from a single \textbf{Propositional} layer can represent any formula in the form of \eqref{eq:condition_action_pair}.
\end{theorem}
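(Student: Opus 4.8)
The plan is to prove both directions: that the aggregated single \textbf{Propositional} layer can express every formula of the form \eqref{eq:condition_action_pair}, and (implicitly, for soundness of the rule-extraction story) that every structure it produces reduces to such a form. The forward direction is the one the theorem literally asserts, so I would focus there. First I would recall the target shape: $\bigodot_{i=1}^m \left(\bigwedge_{j=1}^n \mathit{condition}_j \rightarrow \mathit{action}_i\right)$, where each $\bigodot$ is an $\land$/$\lor$ connective, each $\mathit{condition}_j$ is a (possibly negated, possibly temporally-modified) condition predicate from $\bar{\PredicateSet}$, and each $\mathit{action}_i$ is the analogous thing over $\dot{\PredicateSet}$.

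The key reduction step is the classical material-implication identity $\mathit{condition} \rightarrow \mathit{action} \equiv \lnot\,\mathit{condition} \lor \mathit{action}$, together with De Morgan: $\bigwedge_{j=1}^n \mathit{condition}_j \rightarrow \mathit{action}_i \equiv \left(\bigvee_{j=1}^n \lnot\,\mathit{condition}_j\right) \lor \mathit{action}_i \equiv \bigvee_{j=1}^n\left(\lnot\,\mathit{condition}_j \lor \mathit{action}_i\right)$. Each disjunct $\lnot\,\mathit{condition}_j \lor \mathit{action}_i$ is exactly a two-input cluster of the form $(\lnot)\,o^T_{j} \circ (\lnot)\,o^T_{k}$ with $\circ = \lor$ that the \textbf{Propositional} layer $\mathcal{F}$ is defined to produce — the negation gate supplies the $\lnot$ on the condition leg and leaves the action leg unnegated, and the temporal layer(s) $\mathcal{T}^{\times n}$ below supply whatever $\G$/$\F$ nesting appears inside $\mathit{condition}_j$ and $\mathit{action}_i$ (here is where I would note that a single condition/action predicate with up to $n$ nested temporal operators is reachable by stacking $n$ \textbf{Temporal} layers, as already argued in the "monotonically increases" remark). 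So I would argue: expand the whole formula \eqref{eq:condition_action_pair} by these identities into a big combination — via $\land$ and $\lor$ only — of atomic two-input $\lor$-clusters; since the \textbf{Propositional} layer generates all ${N \choose 2}$ pairwise clusters with independently-settable operator and negation gates, every needed cluster appears among its outputs; and since the \textbf{Aggregation} layer composes its inputs with an arbitrary sequence of $\land$/$\lor$ connectives, it can realize the required top-level Boolean combination.

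I would then be careful about three bookkeeping points. (i) Multiplicity: the same cluster $\lnot\,\mathit{condition}_j \lor \mathit{action}_i$ may be needed more than once (it occurs once per implication that mentions $\mathit{action}_i$), but idempotence of $\land$ and $\lor$ means a single copy from $\mathcal{F}$ suffices, so we never need more than ${N \choose 2}$ distinct clusters. (ii) The aggregation layer as written composes \emph{all} ${N\choose 2}$ clusters in a fixed left-to-right order $o^P_1 \circ_1 o^P_2 \circ_2 \cdots$; I need unused clusters not to corrupt the result, which is handled by the standard trick of choosing the connective feeding an unwanted cluster to be the identity-preserving one (e.g., $\lor$ with a cluster we can force to evaluate $\le$ everything, or more cleanly, noting that the selection gate can make a cluster's contribution vacuous) — I would state this as the "padding" lemma and keep the details light. (iii) Associativity/commutativity of $\land,\lor$ in the quantitative min/max semantics of \eqref{eq:fol_operators} is what licenses re-bracketing the expansion to match the layer's fixed topology; I would cite \eqref{eq:fol_operators} and move on.

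The main obstacle — the step I expect to take real care with — is point (ii)/(iii): matching an \emph{arbitrary} nested $\land$/$\lor$ combination of an arbitrary subset of the ${N\choose 2}$ clusters to the \textbf{Aggregation} layer's \emph{rigid} left-associated chain over \emph{all} clusters. The honest argument is that the target, being a monotone Boolean function of the clusters (no negation at the aggregation level, since all negations were pushed down to the predicate legs by De Morgan), can always be rewritten — by absorption and idempotence — as a left-nested chain in which (a) every cluster appears, and (b) clusters not semantically needed are joined by a connective that neutralizes them. Making that rewriting precise, and checking it is faithful under the softmin/softmax-free (exact min/max) semantics, is the crux; everything else is the routine propositional-logic expansion sketched above. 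I would therefore structure the proof as: Lemma 1 (implication expansion into $\lor$-clusters), Lemma 2 (every such cluster is an output of $\mathcal{F}$, using temporal-layer reachability of the modifiers), Lemma 3 (padding/neutralization for $\mathcal{A}$), then assemble.
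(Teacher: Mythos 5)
Your core construction is the same as the paper's: turn each implication into pairwise clusters via material implication, observe that every such cluster is among the $\binom{N}{2}$ outputs of the Propositional layer (negation gate on the condition leg, Temporal stack supplying the $\G/\F$ modifiers), and let the Aggregation layer supply the outer $\bigodot$. The one substantive divergence is in the expansion itself, and it is in your favor: you expand $(\bigwedge_{j} \mathit{condition}_j)\rightarrow \mathit{action}_i$ as the \emph{disjunction} $\bigvee_j(\lnot \mathit{condition}_j\lor \mathit{action}_i)$, which is correct, whereas the paper joins the same clusters with $\land$, i.e.\ $\bigwedge_j(\lnot \mathit{condition}_j\lor \mathit{action}_i)$, and asserts that this equals $(\bigwedge_j \mathit{condition}_j)\rightarrow \mathit{action}_i$. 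It does not: $\bigwedge_j(\lnot c_j\lor a)\equiv(\bigvee_j c_j)\rightarrow a$, which differs from $(\bigwedge_j c_j)\rightarrow a$ whenever $n\geq 2$ (take $c_1=\top$, $c_2=\bot$, $a=\bot$ to separate them); the two coincide only for $n=1$. So your Lemma~1 actually repairs the paper's key step rather than merely restating it.

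The gap in your argument is exactly where you predicted one: Lemma~3. The Aggregation layer as defined is a left-associated chain over \emph{all} $\binom{N}{2}$ clusters, and two things go wrong. First, a left-associated chain cannot realize a grouping such as $\bigwedge_i\bigvee_j(\cdot)$: for instance $(x_1\lor x_2)\land(x_3\lor x_4)$ is not expressible as $((x_1\circ_1 x_2)\circ_2 x_3)\circ_3 x_4$ for any $\circ_k\in\{\land,\lor\}$ (the chain's dependence on $x_4$ is controlled entirely by $\circ_3$, and either choice is refuted by a suitable assignment), and your appeal to monotonicity, absorption, and idempotence does not circumvent this. Second, neutralizing the unused condition--condition and action--action clusters by configuring them as tautologies or contradictions is sound only under Boolean semantics, not under the quantitative semantics of \eqref{eq:fol_operators}, where $\rho(P\lor\lnot P)=|\rho(P)|$ can still perturb the aggregate value. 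To be fair, the paper's proof is silent on both points, so this is as much a weakness of the theorem's formalization (the intended $\mathcal{A}$ presumably permits arbitrary parenthesization, and the claim is presumably meant Booleanly) as of your argument; but since you explicitly flagged (ii)/(iii) as the crux, you should either close it by strengthening the definition of $\mathcal{A}$, or restrict the representability claim to the case where the outer $\bigodot$ is $\lor$ (or to Boolean satisfaction rather than quantitative value).
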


\begin{proof}
    Consider a \textbf{Propositional} layer with inputs ${\mathit{condition_1, \ldots, condition_n, action_1, \ldots, action_m}}$. For each $\mathit{action_i}$, combine:
    $$(\lnot \mathit{condition_1 \lor action_i) \land \ldots \land (\lnot condition_n \lor action_i})$$
    This is equivalent to $(\mathit{condition_1 \land \ldots \land condition_n) \rightarrow action_i}$. Aggregating for all $\mathit{action_i}$ yields:
    $$\bigodot_{i=1}^m \left((\mathit{condition_1 \land \ldots \land condition_n) \rightarrow action_i}\right)$$
    where $\bigodot$ is either $\land$ or $\lor$, matching the theorem's form.
\end{proof}

This theorem proves that a single \textbf{Propositional} layer is sufficient to express all condition-action pairs, justifying our computationally efficient design choice.

\subsubsection{Ensembling}
\label{sec:ensembling}

Given a set of logic structures $\{\bar{\mathcal{L}}_1, \bar{\mathcal{L}}_2, \ldots, \bar{\mathcal{L}}_k\}$, we can ensemble them by aggregating their outputs with an additional \textbf{Aggregation} layer. Formally,
\begin{equation}
    \begin{aligned}
        \bar{\mathcal{L}}_{\text{ensemble}} & = \mathcal{A}(\bar{\mathcal{L}}_1, \bar{\mathcal{L}}_2, \ldots, \bar{\mathcal{L}}_k)
    \end{aligned}
\end{equation}
In practice, we noticed that ensembling multiple logic structures can improve the robustness of the learned rules.

\subsubsection{Rule Extraction and Simplification}
\label{sec:rule_extraction_and_simplification}

The rule extraction and simplification process transforms the learned logic structure into interpretable condition-action pairs. To interpret the learned logic structure $\bar{\mathcal{L}}$, we extract its logic formula $\mathcal{L}$ by traversing the structure and selecting the most probable operators by concretizing the selection gates and negation gates. Given a selection gate $g_s$, let $\mathcal{C}(\cdot)$ denote the concretizing function:
\begin{equation}
    \begin{aligned}
        \mathcal{C}(g_s) & = \argmax_{op} w^{op}_s, \quad op \in \begin{cases}
                                                                     \{\mathbf{G}, \mathbf{F}, \text{id}\}, \\
                                                                     \{\land, \lor\},
                                                                 \end{cases}
    \end{aligned}
    \label{eq:selection_gate_concretizing}
\end{equation}
where $w^{op}_s$ represents the weights associated with each operator in the selection gate.
For the negation gate $g_n$, the concretizing is determined by the sign of the weight:
\begin{equation}
    \begin{aligned}
        \mathcal{C}(g_n) & = \begin{cases}
                                 \text{negation}, & \text{if } w_{neg} < 0 \\
                                 \text{original}, & \text{otherwise}
                             \end{cases}
    \end{aligned}
    \label{eq:negation_gate_concretizing}
\end{equation}
A concrete example is shown in Fig.~\ref{fig:learnable_logic_structure} by iteratively applying \eqref{eq:selection_gate_concretizing} and \eqref{eq:negation_gate_concretizing} to the selection and negation gates. For an ensemble logic structure, we only need to apply the same rule further for the additional Aggregation layer.

The extracted formula is then simplified to a set of condition-action pairs in the form of \eqref{eq:condition_action_pair}.
We apply the Quine-McCluskey algorithm~\cite{McCluskey1956MinimizationOB} to simplify the extracted formula.
Such simplification removes redundant cluster (e.g., $\land (\PredicateWithIndex{1} \lor \lnot \PredicateWithIndex{1})$).
The resulting formula is then converted to conjunctive normal form:
$
    \bigwedge_{i=1}^n \left( \bigvee_{j=1}^m \bar{P}_j \lor \bigvee_{k=1}^l \dot{P}_k \right),
$
where $\bar{P}_j$ and $\dot{P}_k$ represent condition and action predicates, respectively. By double negating the condition predicates and applying De Morgan's laws, this formula can be further simplified to condition-action pairs in the form of \eqref{eq:condition_action_pair}.

\subsection{Learning From Demonstration}
\label{sec:learning_from_single_class_demonstration}

Most demonstration datasets consist solely of examples representing ideal driving behaviors. Our goal is to learn the logic structure $\bar{\mathcal{L}}^*$ \footnote{$\bar{\mathcal{L}}$ means the learnable logic structure, while $\mathcal{L}$ means a concrete logic formula defined by \eqref{eq:logic_formula_syntax}. } and predicate parameters $\boldsymbol{\theta}^*$ that grade demonstrations with the highest score and unseen behaviors with lower scores. Directly optimizing on \eqref{eq:objective} would simply make the score $\bar{\mathcal{L}}(S; \boldsymbol{\theta})$ to be 1 for all demonstrations.
The optimizer could find ``shortcuts'' and result in learning trivial formulas like $\PredicateWithIndex{1} \lor \lnot \PredicateWithIndex{1}$, which are always true and do not provide meaningful rules.
To address this issue, we introduce two regularization terms for $\boldsymbol{\theta}$ and $\bar{\mathcal{L}}$. The full learning algorithm is presented in Algorithm \ref{algo:train-with-regularization}.
\begin{algorithm}[ht]
    \caption{Training with Regularization}
    \label{algo:train-with-regularization}
    \KwIn{Dataset $\mathcal{D}^+$, initial logic structure $\bar{\mathcal{L}}$, initial parameters $\boldsymbol{\theta}$, learning rates $\alpha$, $\beta$, max weight $w_{\max}$, batch size $B$, number of epochs $E$}
    \KwOut{Optimized $\bar{\mathcal{L}}^*$ and $\boldsymbol{\theta}^*$}

    \SetKwFunction{FUpdate}{Update}
    \SetKwFunction{FRegularize}{Regularize}

    \SetKwProg{Fn}{Function}{:}{}
    \Fn{\FUpdate{$\bar{\mathcal{L}}, \boldsymbol{\theta}, \mathcal{J}, \gamma$}}{
        Update $\boldsymbol{\theta}$ with $\nabla_{\boldsymbol{\theta}} \mathcal{J}$\;
        Update $\bar{\mathcal{L}}$ (i.e., gate weights) with $\nabla_{\bar{\mathcal{L}}} \mathcal{J}$\;
        \Return $\bar{\mathcal{L}}, \boldsymbol{\theta}$\;
    }

    \Fn{\FRegularize{$\bar{\mathcal{L}}, \boldsymbol{\theta}, \alpha, \beta, w_{\max}$}}{
        $\boldsymbol{\theta} \leftarrow \boldsymbol{\theta} - \alpha \cdot \text{sign}(\frac{\partial \bar{\mathcal{L}}}{\partial \boldsymbol{\theta}})$\;
        \ForEach{Aggregation layer in $\bar{\mathcal{L}}$}{
            $w_{\land} \leftarrow \min(w_{\land} + \beta, w_{\max})$\;
        }
        \Return $\bar{\mathcal{L}}, \boldsymbol{\theta}$\;
    }

    \For{epoch $\leftarrow 1$ \KwTo $E$}{
        \For{each batch $\{S_1, \ldots, S_B\} \sim \mathcal{D}^+$}{
            $\mathcal{J} \leftarrow \frac{1}{B}\sum_{i=1}^B \bar{\mathcal{L}}(S_i; \boldsymbol{\theta})$\;
            $\bar{\mathcal{L}}, \boldsymbol{\theta} \leftarrow$ \FUpdate{$\bar{\mathcal{L}}, \boldsymbol{\theta}, \mathcal{J}, \gamma$}\;
            $\bar{\mathcal{L}}, \boldsymbol{\theta} \leftarrow$ \FRegularize{$\bar{\mathcal{L}}, \boldsymbol{\theta}, \alpha, \beta, w_{\max}$}\;
        }
    }
    \Return $\bar{\mathcal{L}}^* \leftarrow \bar{\mathcal{L}}$, $\boldsymbol{\theta}^* \leftarrow \boldsymbol{\theta}$
\end{algorithm}

The learning system evaluates driving behaviors in a state space $S = \{(\mathcal{E}_t, \tau_t)\}_{t=0}^T$, where behaviors with $\bar{\mathcal{L}}(S; \boldsymbol{\theta}) > 0$ are considered acceptable. Without constraints, the system might learn shortcuts that accept almost any behavior. We prevent this through two types of regularization:

\subsubsection{Preventing Shortcut Learning Through Regularization}

The learning system evaluates driving behaviors in a state space $S = \{(\mathcal{E}_t, \tau_t)\}_{t=0}^T$, where behaviors with $\bar{\mathcal{L}}(S; \boldsymbol{\theta}) > 0$ are considered acceptable. Without constraints, the system might learn shortcuts that accept almost any behavior.
Consider a simple example where we want to learn rules for safe lane changes. Without regularization, the system might learn the rule:
$\text{InLane} \lor \text{SafeDistance}$
that accepts behaviors in which the car is either in a lane OR maintains safe distance, which is clearly too permissive. We prevent such shortcuts through two types of regularization: First, we gradually tighten the parameters $\boldsymbol{\theta}$ of each rule using:
$
    \boldsymbol{\theta} = \boldsymbol{\theta} - \alpha \cdot \text{sign}\left(\frac{\partial \bar{\mathcal{L}}}{\partial \boldsymbol{\theta}}\right).
$
In our example, this might gradually increase the required safe distance threshold from 1 meter to a more reasonable 2 meters. Second, when combining rules, we encourage the use of AND ($\land$) over OR ($\lor$) operations using:
$
    w_{op}^{\land} = \min(w_{op}^{\land} + \beta, w_{\max})
$
This helps the system learn more appropriate rules like:
$\text{InLane} \land \text{SafeDistance}$
requiring both conditions to be met for a safe lane change.

Importantly, these constraints only eliminate spurious shortcuts - any rule structure or parameter that genuinely captures important driving behavior will remain intact, as it will be consistently reinforced by the demonstration data despite the regularization pressure.
\section{Experiments}

\subsection{General Experiment Setup}
We consider 63 condition predicates and 20 action predicates in the predicate set $\PredicateSet$.
All predicates are categorized into four types: (1) safety-related (e.g., will the plan result in obstacle collision?) (2) comfort-related (e.g., is the acceleration in a reasonable range?); (3) efficiency-related (e.g., is the car is in a slow lane?); (4) environment-related (e.g., is the curvature of the current lane too high?).
Implementing the predicate set $\PredicateSet$ involves LLM-aided design \cite{Chen2022CodeTCG}, but the final predicates were manually checked to ensure correctness and sensibility.

We used all the NuPlan demonstrations in Singapore, Pittsburgh, and Boston, and part of the demonstrations in Las Vegas, to train the logic structure. The scenarios are grouped into 9 types (the first column in Table \ref{tab: closed-loop-planning-performance}). The dataset is split into 90\% training and 10\% validation. The validation set is used for hyperparameter tuning and early stopping. Following PDM \cite{Dauner2023CORL}, we use the log replay to obtain the 4-second future trajectories of other agents in both training and evaluation.

We trained an ensemble $\LogicStucture_{\mathit{ensemble}}$ with 10 different $\LogicStucture$. Each $\LogicStucture$ has 2 layers of the Temporal layer. The single class regularization parameters $\alpha$ and $\beta$ are set to $10^{-5}$ and $10^{-3}$, respectively. The learning rate is set to $10^{-4}$, and optimized with the Adam optimizer. The batch size is set to 32. The training process is stopped when the validation loss does not decrease for 10 epochs. We used 15 trajectory candidates for evaluation in closed-loop simulation.

\subsection{Case Study}
\subsubsection{Logic Rules Discovered}
\label{sec:logic_rules_discovered}

Fig.~\ref{fig:motivation_example} shows three of the rules learned by the grading logic network. Given the motion plan proposed by PDM \cite{Dauner2023CORL}, the grading logic network assigns a score to each plan based on the learned rules. Fig.~\ref{fig:case-study-example} shows why the \textcolor[HTML]{2278B4}{blue}, \textcolor[HTML]{D52928}{red} and \textcolor[HTML]{F98217}{orange} plans receive lower scores. The blue plan receives a low score for going beyond the drivable area (the gray area), which violates the rule $\top \rightarrow \mathbf{G} \mathit{InDrivable}$. The red plan is penalized for exceeding comfort constraints (the blue dashed line) on lateral acceleration, which violates the rule $\mathbf{G} \mathit{SafeTTC} \rightarrow \mathbf{G} \mathit{Comfortable}$. The orange plan receives a lower score for exceeding the speed limit (the red dashed line) when not overtaking another vehicle, which violates the rule $\lnot \mathit{SpeedLimit} \rightarrow \mathbf{F} \mathit{Overtaking}$. These rules are almost always discovered in our experiment. One exception is that the rule $\lnot \mathit{SpeedLimit} \rightarrow \F \mathit{Overtaking}$ could sometimes devolve to  $\mathit{SpeedLimit}$ or $\G \mathit{SpeedLimit}$   in scenarios where overtaking happens rarely (e.g., following other cars). More detailed case studies are provided on our website.

\begin{figure}[h!]
    \centering
    \includegraphics[width=\linewidth]{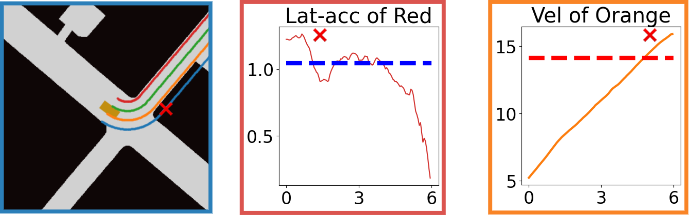}
    \caption{
        Case study on discovered rules. While our method evaluates 15 trajectory candidates in practice, we show only 4 representative trajectories here for visual clarity. From left to right, these sub-figures explain why the blue, red, and orange plans received lower scores. The blue plan violates the drivable area rule, the red plan exceeds comfort constraints on lateral acceleration, and the orange plan breaks the speed limit without overtaking context.
    }
    \label{fig:case-study-example}
\end{figure}

\subsubsection{Parameter Optimization}

$\mathit{Comfortable}_{\boldsymbol{\theta}}$ is a predicate measure if a motion plan is comfortable, which has parameters $\boldsymbol{\theta} = [\theta_{a_{f}}, \theta_{a_{b}}, \theta_{a_{l}}, \theta_{a_{r}}]$, representing thresholds for acceleration in forward, backward, left, and right directions. It can be defined as:
\begin{align}
    \mathit{Comfortable}_{\boldsymbol{\theta}} := \tanh(\min_{\substack{i \in \{a_{f}, a_{b}, a_{l}, a_{r}\}}} \{\theta_i - i(\tau)\})
    \label{eq:comfortable_predicate}
\end{align}
where $i(\tau)$ represents the maximum acceleration in the plan $\tau$ for each respective direction.
Only when all the accelerations are below their thresholds, is the predicate evaluated as positive. The $\tanh$ function is used to ensure the output is in $[-1, 1]$. All threshold parameters in \eqref{eq:comfortable_predicate} are differentiable and can be learned from the training demonstration data.
The learned parameters $\boldsymbol{\theta}$ are shown in Table \ref{tab: parameter-optimization}.

\begin{table}[!htp]\centering
    \caption{Case Study on Learned Parameters of $\mathit{Comfortable}_{\boldsymbol{\theta}}$}\label{tab: parameter-optimization}
    \scriptsize
    \begin{tabular}{l|cccc}\toprule
                          & $\theta_{a_{f}}$ & $\theta_{a_{b}}$ & $\theta_{a_{l}}$ & $\theta_{a_{r}}$ \\\midrule
        \textbf{Standard} & 1.23             & 1.13             & 0.98             & 0.98             \\
        \textbf{Learned}  & 1.1  $\pm$ 0.21  & 1.045 $\pm$ 0.12 & 0.9 $\pm$ 0.11   & 0.95 $\pm$ 0.51  \\
        \bottomrule
    \end{tabular}
    \begin{tablenotes}
        \item The learned parameters are shown in the format of mean $\pm$ standard deviation computed from 5 independent runs.
    \end{tablenotes}
\end{table}

The \textbf{Standard} parameters of comfortable acceleration are provided in \cite{deWinkel2023Standards} from a user study.
We observe the \textbf{Learned} parameters are close to \textbf{Standard}. In practice, it is noticeable that the acceleration thresholds are hard to estimate. However, the learned parameters are comparable to the standard values by learning from data, which indicates the proposed method can learn the parameters characterizing the demonstration data well. The notable standard deviations in Table~\ref{tab: parameter-optimization} reflect inherent variations in driving behavior, from right turns showing higher variation than left turns, to forward acceleration varying more than deceleration. These variations persist despite analyzing over 100 hours of driving data, suggesting they stem from inherent differences in driving environments (e.g., Boston's narrow streets versus Las Vegas's wide boulevards) rather than insufficient data. Potentially, better scenario-based classification could address these variations, but we leave this for future work.

\subsection{Evaluation in Closed-loop Planning}

The learned rules are evaluated in a closed-loop planning system using NuPlan under two settings: Closed-Loop Score Non-Reactive (CLS-NR), where other agents follow recorded trajectories, and Closed-Loop Score Reactive (CLS-R), where agents are controlled by an Intelligent Driver Model (IDM) policy. Performance is measured using normalized scores (0-1) across safety (collision avoidance, following distances), rule compliance (lane keeping, speed limits), progress along the route, and comfort (acceleration, jerk), assessing the rules' ability to balance driving requirements in both predictable and interactive scenarios. The results are shown in Table \ref{tab: closed-loop-planning-performance}. Here, we compare our learned Scoring Logic Network (SLN) with the Expert Rules (ER) in PDM \cite{Dauner2023CORL} and the Neural Critic (NC) \cite{jiang2022efficient}. The last two rows show the overall performance on ``All'' the scenarios and the Val14 split originally used for evaluating the ER \cite{Dauner2023CORL}.

\begin{table}[!htp]\centering
    \caption{Closed-Loop Planning Performance}
    \label{tab: closed-loop-planning-performance}
    \scriptsize
    \begin{tabular}{l@{\hspace{0.8em}}|c@{\hspace{0.8em}}c@{\hspace{0.8em}}c@{\hspace{0.8em}}|c@{\hspace{0.8em}}c@{\hspace{0.8em}}c@{\hspace{0.8em}}|c@{\hspace{0.8em}}c@{\hspace{0.8em}}c}\toprule
                    & \multicolumn{3}{c}{Rule Complexity} & \multicolumn{3}{c}{CLS-NR $\uparrow$} & \multicolumn{3}{c}{CLS-R $\uparrow$}                                                                               \\\cmidrule{2-10}
                    & $|\bar{\mathcal{P}}|$               & $|\dot{\mathcal{P}}|$                 & \#. Rules                            & ER            & NC   & SLN           & ER            & NC   & SLN           \\\midrule
        Change Lane & 20                                  & \multirow{9}{*}{20}                   & 24                                   & 0.89          & 0.79 & \textbf{0.92} & 0.88          & 0.77 & \textbf{0.91} \\
        Following   & 10                                  &                                       & 16                                   & \textbf{0.94} & 0.88 & 0.91          & \textbf{0.96} & 0.90 & \textbf{0.96} \\
        Near Static & 10                                  &                                       & 19                                   & 0.87          & 0.77 & \textbf{0.93} & 0.87          & 0.85 & \textbf{0.87} \\
        Near VRU    & 10                                  &                                       & 17                                   & 0.87          & 0.81 & \textbf{0.93} & \textbf{0.89} & 0.75 & 0.87          \\
        Turn        & 20                                  &                                       & 31                                   & 0.89          & 0.82 & \textbf{0.91} & 0.89          & 0.71 & \textbf{0.91} \\
        Stopping    & 10                                  &                                       & 13                                   & \textbf{0.90} & 0.77 & \textbf{0.90} & 0.92          & 0.85 & \textbf{0.93} \\
        Starting    & 20                                  &                                       & 27                                   & \textbf{0.91} & 0.85 & 0.89          & 0.88          & 0.84 & \textbf{0.90} \\
        Stationary  & 20                                  &                                       & 21                                   & \textbf{0.94} & 0.73 & \textbf{0.94} & 0.96          & 0.81 & \textbf{0.97} \\
        Traversing  & 20                                  &                                       & 29                                   & 0.87          & 0.71 & \textbf{0.90} & 0.89          & 0.70 & \textbf{0.90} \\ \midrule
        All         & \multirow{2}{*}{63}                 & \multirow{2}{*}{20}                   & \multirow{2}{*}{124}                 & 0.90          & 0.79 & \textbf{0.92} & 0.91          & 0.81 & \textbf{0.93} \\
        Val14       &                                     &                                       &                                      & 0.93          & 0.81 & \textbf{0.94} & 0.92          & 0.83 & \textbf{0.93} \\
        \bottomrule
    \end{tabular}
    \begin{tablenotes}
        \item The videos and explanation of all the scenarios are available \href{https://xiong.zikang.me/FLoRA/#scenarios}{online}.
        \item The proposal approach we used here is from PDM \cite{Dauner2023CORL}.
    \end{tablenotes}
\end{table}

The complexity of rules is measured by the number of action predicates $|\bar{\mathcal{P}}|$, the number of condition predicates $|\dot{\mathcal{P}}|$, and the total number of extracted action-condition pairs. The detailed results are shown in Table \ref{tab: closed-loop-planning-performance}. In the All and Val14 splits, we used all of our 63 condition predicates and 20 action predicates. In the CLS-NR setting, SLN outperforms ER in 6 out of 9 scenario types, ties in 2, and underperforms in 1, while in CLS-R, it surpasses ER in 5 scenarios, ties in 2, and falls short in 2. Notably, SLN achieves this performance \textit{without the complex relationship modeling or extensive parameter tuning} required by ER. Compared to the NC, SLN exhibits superior performance across all 9 scenario types in both settings, offering substantial improvement coupled with interpretability. Overall (in the last two rows), SLN consistently outperforms both ER and NC for the ``All'' scenarios and the ``Val14'' \cite{Dauner2023CORL} split is used to evaluate ER, in both reactive and non-reactive settings. Following NuPlan's log frequency \cite{Karnchanachari2024TowardsLP}, we run the closed-loop planning system at 20 Hz. To conserve computational resources, we also evaluated performance at 10 Hz and 5 Hz. Results show minimal performance impact, with scores the same on the ``All'' scenarios and minor degradation (<0.2, CLS-NR: 0.93, CLS-R: 0.91) on the ``Val14'' split for both reduced frequencies.

\subsection{Ablation Studies}
\label{sec:ablation-studies}

\subsubsection{Proposal Approach}
\label{sec:proposal_approach}
A robust rule should above all be able to filter out undesirable plans for different proposal approaches.
We evaluate the learned rules on different proposal approaches, including PDM \cite{Dauner2023CORL}, AT-Sampler \cite{jiang2022efficient}, Hybrid-PDM \cite{Dauner2023CORL}, and ML-Prop \cite{scheel2022urban}. For both rule-based and learning-based proposal approaches, our learned rules demonstrate superior performance in the closed-loop planning system, as shown by the data presented in Table \ref{tab: proposal-approach-ablation}.
\begin{table}[h!]\centering
    \caption{Proposal Approach Ablation on ``All''}\label{tab: proposal-approach-ablation}
    \scriptsize
    \begin{tabular}{l|ccc|cccr}\toprule
                                             & \multicolumn{3}{c|}{CLS-NR $\uparrow$} & \multicolumn{3}{c}{CLS-R $\uparrow$}                                               \\\cmidrule{2-7}
                                             & ER                                     & NC                                   & SLN (ours)    & ER   & NC   & SLN (ours)    \\\midrule
        PDM \cite{Dauner2023CORL}            & 0.90                                   & 0.79                                 & \textbf{0.92} & 0.91 & 0.81 & \textbf{0.93} \\
        AT-Sampler \cite{jiang2022efficient} & 0.83                                   & 0.81                                 & \textbf{0.89} & 0.84 & 0.80 & \textbf{0.90} \\ \midrule
        Hybrid-PDM   \cite{Dauner2023CORL}   & 0.90                                   & 0.79                                 & \textbf{0.92} & 0.91 & 0.79 & \textbf{0.92} \\
        ML-Prop$^*$ \cite{scheel2022urban}   & 0.81                                   & 0.75                                 & \textbf{0.86} & 0.82 & 0.76 & \textbf{0.85} \\
        \bottomrule
    \end{tabular}
    \begin{tablenotes}
        \item $^*$ \cite{scheel2022urban} introduced an deterministic policy. We generate one initial proposal with this policy and then add lateral deviations to generate multiple parallel proposals.
    \end{tablenotes}
\end{table}

\subsubsection{Number of Proposal Candidates}
\label{sec:number_of_proposal_candidates}

\begin{wrapfigure}{r}{0.25\textwidth}
    \vspace{-0.5cm}
    \centering
    \includegraphics[width=\linewidth]{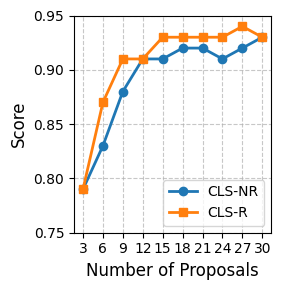}
    \caption{Proposal Number Ablation}
    \label{fig:ablation_candidates}
\end{wrapfigure}

As shown in \ref{fig:ablation_candidates}, the performance of both CLS-NR and CLS-R improves significantly when increasing proposals from 3 to 15 (CLS-NR: $0.81 \rightarrow 0.92$, CLS-R: $0.79 \rightarrow 0.93$), but plateaus beyond 15 proposals, showing only minimal gains up to 30. Therefore, 15 proposals were used in the main experiments.

\subsubsection{Regularization Hyperparameters}
\label{sec:regularization_hyperparameters}

We conducted an ablation study to examine the impact of regularization hyperparameters $\alpha$ and $\beta$ on learning performance.
\begin{wraptable}{l}{0.6\linewidth}
    \centering
    \begin{threeparttable}
        \caption{Regularization Ablation}\label{tab:ablation-on-regularization}
        \centering
        \scriptsize
        \begin{tabular}{@{}ccc|ccc@{}}\toprule
            $\alpha$  & Conv & Triv & $\beta$   & Conv & Triv \\\midrule
            $10^{-4}$ & 99   & 0    & $10^{-2}$ & 131  & 0    \\
            $10^{-5}$ & 37   & 0    & $10^{-3}$ & 37   & 0    \\
            $10^{-6}$ & 24   & 0.4  & $10^{-4}$ & 27   & 0.5  \\
            $10^{-7}$ & 13   & 0.9  & $10^{-5}$ & 12   & 1    \\
            0         & 11   & 1    & 0         & 11   & 1    \\
            \bottomrule
        \end{tabular}
    \end{threeparttable}
\end{wraptable}
``Conv'' means the rate of convergence measured by the number of epochs before the validation loss stops decreasing for 10 epochs. ``Triv'' is the ratio of learned rules that are trivial (e.g., $\top \rightarrow \top$) across 10 runs in the ``Following'' scenario.
While varying $\alpha$ ($\beta$), the value of $\beta$ ($\alpha$) is fixed to $10^{-3}$ ($10^{-5}$).
When $\alpha = 0$ or $\beta = 0$ (i.e. no regularization), the model converges quickly (11 epochs) but learns only trivial rules.
Larger $\alpha$ ($10^{-7}$ to $10^{-4}$) and $\beta$ ($10^{-5}$ to $10^{-2}$) values increase convergence time but reduce trivial rules. We found $\alpha = 10^{-5}$ and $\beta = 10^{-3}$ an optimal choice, achieving convergence in 37 epochs with no trivial rules, balancing training efficiency and rule quality.

\section{Conclusion, Limitations, and Future Work}

This paper introduces FLoRA, a framework for learning interpretable scoring rules expressed in temporal logic for autonomous driving planning systems. FLoRA addresses key challenges by developing a learnable logic structure to capture nuanced relationships among driving predicates; proposing a data-driven method to optimize rule structure and parameters from demonstrations; and presenting an optimization framework for learning from driving demonstration data. Experimental results on the NuPlan dataset show that FLoRA outperforms both expert-crafted rules and neural network approaches across various scenarios, with different proposer types. This work thus represents a significant step towards creating more adaptable, interpretable, and effective scoring mechanisms for autonomous driving systems, bridging the gap between data-driven approaches and rule-based systems. A current limitation of FLoRA is that it can only discover relationships between provided predicates, which are manually designed. Future work will focus on extending FLoRA with LLM-aided predicate discovery and incorporating accident data as they become available through real-world autonomous driving deployments.

\bibliographystyle{IEEEtran}
\bibliography{ref}

\end{document}